\newcommand{\beq}[1][\vspace{0.3em}]{#1\begin{equation}}
\newcommand{\eeq}{\end{equation}}
\newcommand{\bit}{\vspace{0mm}\begin{itemize}}
\newcommand{\eit}{\vspace{0mm}\end{itemize}}
\newcommand{\ben}{\vspace{0mm}\begin{enumerate}}
\newcommand{\een}{\vspace{0mm}\end{enumerate}}
\newtheorem{theorem}{Theorem}
\newtheorem{lemma}[theorem]{Lemma}
\newcommand{\bb}[1]{\mathbb{#1}}
\newtheorem{example}{Example} 
\DeclareMathOperator*{\argmax}{arg\,max}
\DeclareMathOperator*{\argmin}{arg\,min}
\newcommand\drawnestedsets[4]{
  \def\position{#1}
  \def\nbsets{#2}
  \def\listofnestedsets{#3}
  \def\reversedlistofcolors{#4}

  \coordinate (circle-0) at (#1);
  \coordinate (set-0) at (#1);
  \foreach \set [count=\c] in \listofnestedsets {
    \pgfmathtruncatemacro{\cminusone}{\c - 1}
    \node[right=3pt of circle-\cminusone,inner sep=0]
    (set-\c) {$\set$};
    \node[ellipse,inner sep=0,fit=(circle-\cminusone)(set-\c)]
    (circle-\c) {};
  }

  \begin{scope}[on background layer]
    \foreach \col[count=\c] in \reversedlistofcolors {
      \pgfmathtruncatemacro{\invc}{\nbsets-\c}
      \pgfmathtruncatemacro{\invcplusone}{\invc+1}
      \node[ellipse,draw,fill=\col,inner sep=0,
      fit=(circle-\invc)(set-\invcplusone)] {};
    }
  \end{scope}
}
\title{Follow the Neurally-Perturbed Leader for Adversarial Training}
\author{%
  Ari ~Azarafrooz }
\begin{document}

\maketitle

\begin{abstract}
Game-theoretic models of learning are a powerful set of models that optimize multi-objective architectures. Among these models are zero-sum architectures that have inspired adversarial learning frameworks. An important shortcoming of these zeros-sum architectures is that gradient-based training leads to weak convergence and \textit{cyclic} dynamics. 

We propose a novel `follow the leader' training algorithm for zeros-sum architectures that guarantees convergence to mixed Nash equilibrium \textit{without} cyclic behaviors. It is a special type of `follow the \textit{perturbed} leader' algorithm where perturbations are the result of a neural \textit{mediating agent}. 

We validate our theoretical results by applying this training algorithm to games with convex and non-convex loss as well as generative adversarial architectures. Moreover, we customize the implementation of this algorithm for adversarial imitation learning applications. At every step of the training, the mediator agent perturbs the observations with generated \textit{codes}.  As a result of these mediating codes, the proposed algorithm is also efficient for learning in environments with \textit{various factors of variations}. We validate our assertion by using a procedurally generated game environment as well as synthetic data. Github implementation is available \href{https://github.com/azarafrooz/FTNPL}{here}.
 \end{abstract}

\section{Introduction}

A wide range of recent learning architectures expresses the learning formulation as a multi-objective and game-theoretic problem. They are useful to build log-likelihood free deep generative models \cite{Goodfellow2014GenerativeAN, Schmidhuber1992LearningFC}, learn disentangled representations \cite{Chen2016InfoGANIR, Li2017InfoGAILII}, learn adversarial imitation \cite{Ho2016GenerativeAI}, learn complex behaviors \cite{Wang2019PairedOT}, incorporate hierarchical modeling to mitigate the reinforcement (RL) exploration issues \cite{Vezhnevets2017FeUdalNF, Kulkarni2016HierarchicalDR, Azarafrooz2019HierarchicalSA}, formulate curiosity \cite{Schmidhuber1991APF,Schmidhuber2019UnsupervisedMA, Pathak2017CuriosityDrivenEB} and imagination objectives in RL \cite{Racanire2017ImaginationAugmentedAF}, tighten the lower bound for mutual information estimates \cite{Poole2019OnVB}, compute synthetic gradients \cite{Jaderberg2016DecoupledNI}, etc. However, the behavior of the gradient-based methods of training in these architectures is even more complicated than those of single objective ones. For example,  \cite{Balduzzi2018TheMO, Bailey2019MultiAgentLI} show that gradient-based methods suffer from recurrent dynamics, slow convergence and inability to measure convergence in zero-sum type games. The existence of cyclic behavior necessitates a slow learning rate and convergence. \cite{Balduzzi2018TheMO} proposed a new gradient-based method by utilizing the dynamics of Hamiltonian and Potential games.

The focus of our paper is on training \textit{adversarial} architectures using \textit{regret minimization} framework \cite{Grnarova2017AnOL,Hazan2017EfficientRM,Kodali2018OnCA}. Adversarial training using regret minimization framework also suffers from \textit{cyclic} behaviors \cite{Mertikopoulos2018CyclesIA, Bailey2019MultiAgentLI, Bailey2018MultiplicativeWU, Piliouras2014OptimizationDC}. However, as we show in this paper, it provides a mathematically elegant framework for designing novel training algorithms that avoid cyclic dynamics. Another difficulty of standard regret minimization methods is that they fail in \textit{non-convex} settings. To address the non-convexity issue, \cite{Gonen2018LearningIN} invokes an offline oracle to introduce random noise. We propose a novel algorithm to address both of these issues. We show that a neural network mediator can remove the cyclic behaviors by perturbing the dynamics of the game. Moreover, there is no need for convexity assumption in our approach. 
The mediator perturbs the dynamic of the game by augmenting the observation of the players with so-called `correlated' codes. The nature of such codes is related to the notion of correlated equilibrium \cite{Aumann1987CorrelatedEA, Blum2005FromET} and similar to disentangled codes in \cite{Chen2016InfoGANIR, Li2017InfoGAILII}. Thanks to these correlated codes, the proposed method is also an efficient approach for learning in multi-modal and adaptive environments. 
\section{Problem Formulation}
\subsection{Game Theoretical Preliminaries}\label{game_definitions}
\begin{figure}[ht]
\begin{center}
\begin{tikzpicture}\tiny
\drawnestedsets{0,0}{4}{PNE,MNE,\textbf{\color{white}{CorEq}}, CCE}{white,gray,white, white}
\end{tikzpicture}
\caption{\small Generalizations of pure Nash equilibria. `PNE' stands for pure Nash equilibria; `MNE'
for mixed Nash equilibria; `CorEq' for correlated equilibria; and `No Regret (CCE)' for coarse
correlated equilibria.}
\label{equilibria:fig}
\end{center}
\end{figure}
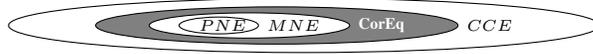
 Consider a zero-sum game between two players, agent $\pi$ and discriminator $D$ with strategies $\phi \in \Phi, \omega \in \Omega$ respectively. In a deep learning game, each player is a neural network with its parameters as strategies. Let $L(\phi,\omega)$ denote the loss of the game and $\mu \in \triangle  ~ \Phi \times \Omega $ be \textit{joint} mixed strategy, where $\bigtriangleup$ and $\times$ denote the probability simplex and cartesian product respectively. Let us also define the \textit{marginal} mixed strategies $\mu_{\pi}, \mu_{D}$. For example, $\mu_{\pi}(\phi)$ is the probability that an agent plays strategy $\phi$. A Nash equilibrium gets achieved when no player has an incentive to deviate unilaterally. If a Nash equilibrium gets achieved over the probability distributions of strategies, it is a mixed Nash equilibrium (MNE). If one relaxes the assumptions on the equilibrium, other game-theoretic concepts can be derived, as shown in Fig. \ref{equilibria:fig}. The least restrictive of these is known as \textit{coarse} correlated equilibria (CCE). It corresponds to the \textit{empirical} distributions that arise from the repeated joint-play by no-regret learners.  An essential concept required for the development of our algorithm is correlated equilibrium (CorEq) \cite{Aumann1987CorrelatedEA}. Computing CorEq amounts to solving a linear program. As a result, it is computationally less expensive than computing NE, which amounts to computing a fixed point.  However, CorEq is concerned with \textit{joint} mixed strategies (e.g $\mu(\phi,\omega))$ which is and stronger and more restrictive notion than CCE. We therefore first define a simpler notion of MCorEq. We are particularly interested in its maximum entropic version similar to \cite{Ortiz2006MaximumEC}.
 
 \textbf{Marginal correlated equilibrium (MCorEq)}:  Let $u_{\pi}(\phi) = \bb{E}_{\omega \sim \mu_{D}}L(\phi,\omega)$, $u_{D}(\omega) = - \bb{E}_{\phi \sim \mu_{\pi}}L(\phi,\omega)$ be the \textit{marginal loss}. Also let $g_D(k,\hat{k}) =u_D[\hat{k}]-u_D[k]$ and $g_\pi(k,\hat{k}) = u_{\pi}[\hat{k}] - u_{\pi}[k]$ be discriminator's and agent's \textit{marginal payoff gain} for selecting strategy $k$ instead of $\hat{k}$. Marginal payoff $g$ quantifies the motivation of the users to switch to other strategies. MCorEq is then a marginal mixed strategy $(\mu_{\pi}, \mu_{D})$ such that $\forall (\phi, \hat{\phi}), \forall (\omega, \hat{\omega}) $, $\mu_{\pi}(\phi)>0, \mu_{D}(\omega)>0$, $g_\pi(\phi, \hat{\phi}) <=0$ and $g_D(\omega,\hat{\omega}) <=0$, i.e no user benefits (in the marginal loss sense) from switching strategies.

\textbf{MeMCorEq}: It is the solution to the convex optimization problem of $\mu^{*} = \argmax_{\mu \in MCorEq} H(\mu)$
 where $H$ is the entropy. It is therefore a convex optimization problem. 
\subsection{No-regret Learning}
One gradient-based method of learning in games is no-external-regret learning. External regret is the difference between the actual loss achieved and the smallest possible loss that could have been achieved on the sequence of plays by playing a \textit{fixed} action. For example in the context of the aforementioned zero-sum game, the regret for $\pi$ and $D$ is $ \displaystyle  \max_{\phi \in \Phi}  \sum_{t=0}^{T-1}  L(\phi_t,\omega_t) -  L(\phi,\omega_t)$ and $ \displaystyle \min_{\omega \in \Omega} \sum_{t=0}^{T-1}  L(\phi_t,\omega_t) -  L(\phi_t,\omega)$ respectively.  Regret minimization algorithms ensure that long term regret is \textit{sublinear} in the number of time steps.  It is known that the optimal minimax regret of zeros-sum games is $\mathcal{O}(\log(T))$ \cite{Freund1999AdaptiveGP, Rakhlin2013OptimizationLA}. There are several classes of algorithms that can yield sub-linear regret. One well-known class of no-regret learners is \textit{Follow The Regularized Leader (FTRL)}\cite{ShalevShwartz2006ConvexRG}:
 \begin{equation}
\begin{split}
\label{ftrl:eq}
\phi_t = \argmin \sum_{i<t} L(\phi_i, \omega_i) + h(\phi_i) ~~   ;  ~~
\omega_t = \argmax \sum_{i<t} L(\phi_i,\omega_i) - h(\omega_i) 
\end{split}
\end{equation} 
 
 One common choice is $\ell_2$ regularization $h(\theta) =  \| \theta \|^2$ which we used in the following examples to illustrate the problem visually. FTRL algorithms are not suitable for non-convex losses. To address the non-convex optimization case,  \cite{Gonen2018LearningIN} proposed a \textit{follow the perturbed leader (FTPL)} by choosing $h(\theta)=\sigma \theta$ where $\sigma \sim (Exp(\zeta))$ is an exponential noise introduced by an oracle.
 
 \textbf{Weak convergence and cyclic dynamics}
 Let $h^T=(\mu_0,...,\mu_T)$ be the history of past strategies when the game is played repeatedly up to time $T$ and $\mu_t=(\phi_t,\omega_t)$. 
 It is known that that no-regret dynamics \textit{weakly converge} to MNE in zero-sum games \cite{Freund1999AdaptiveGP, Immorlica2011DuelingA}. Weakly convergence implies that time average of $h^T$ will converge to the MNE. Previous works relied on this weaker notion of convergence for training adversarial networks \cite{Grnarova2017AnOL}. However, this weaker notion can be misleading. It is as meaningful as the statement that ``moon converges to earth'' instead of stating that the moon follows a trajectory that has the earth as its center \cite{Papadimitriou2016FromNE}. Weak convergence is also related to the dynamics of FTRL in adversarial games which is known to exhibit recurrent dynamics \cite{Mertikopoulos2018CyclesIA}. This cyclic behavior is common across all choices of regularizers $h$ and learning rates. We also show in the following example that weak convergence and cyclic behavior also hold true for the FTPL. 
\begin{example}\label{example1}
Consider a specific type of zero-sum game known as matching Pennies game with $L(\phi,\omega)=\phi\textbf{A}\omega^T$ with $A=\begin{bmatrix}
    1 & -1  \\
    -1 & 1
  \end{bmatrix}$. MNE is $\mu^*=(0,0)$ for this game. 
\end{example}

Trajectories of FTRL training dynamic for this example is visualized in Fig. \ref{fig:cycle_FTRL} and Fig. \ref{fig:cycle_FTRL_large_lr}. 
Note in Fig. \ref{fig:cycle_FTRL_large_lr}, how a large learning rate leads to an even weaker convergence (in the sense defined above). Fig. \ref{fig:FTPL_convex} shows that FTPL also exhibits recurrent dynamics. Aside from the weak convergence, another implication of this cyclic behavior is the slow learning rate, as discussed in \cite{Balduzzi2018TheMO}. This is because gradient-based algorithms do not follow the steepest path toward fixed points due to the `rotational force'. 

\textbf{Non-convexity} FTRL with $\ell_2$ regularization does not converge in non-convex situations as shown in the following example.
\begin{example}\label{example2}
Assume $\pi$ to have the same loss as in Ex. \ref{example1} but let the loss for $D$ to be defined $\text{ReLU}(-\phi\textbf{A}\omega^T)$ where $\text{ReLU}(x) = \max(0,x)$. MNE stays the same $\mu^*=(0,0)$ but FTRL does not converge to MNE as shown in Fig \ref{fig:FTRL_nonconvex}. Unlike FTRL, FTPL weakly converges to MNE. We visualized the learning trajectories of FTPL in \ref{fig:FTPL_nonconvex}.
\end{example}

We present an approach that does not require convexity assumption and that converges (instead of weak convergence) to MNE without cyclic behavior.

 \begin{figure}
\centering
\begin{subfigure}{0.23\textwidth}
\includegraphics[width=\textwidth]{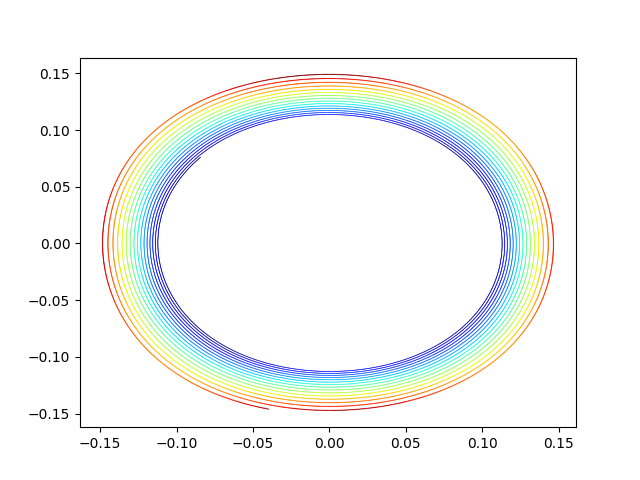}
\caption{\tiny FTRL: small leaning rate}
\label{fig:cycle_FTRL}
\end{subfigure}
~
\begin{subfigure}{0.23\textwidth}
\includegraphics[width=\textwidth]{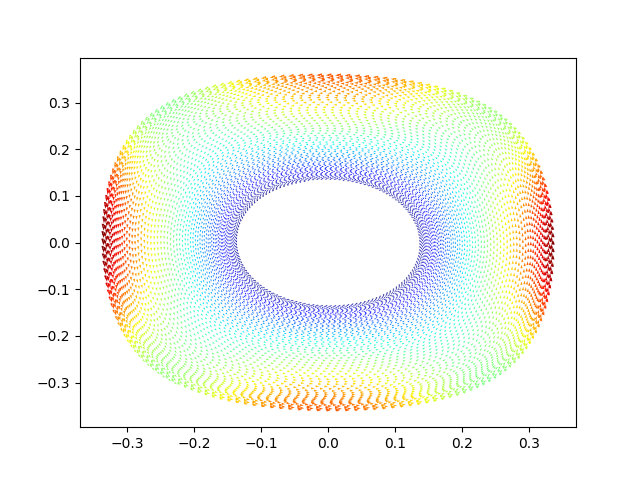}
\caption{\tiny FTRL: large learning rate}
\label{fig:cycle_FTRL_large_lr}
\end{subfigure}
~
\begin{subfigure}{0.23\textwidth}
\includegraphics[width=\textwidth]{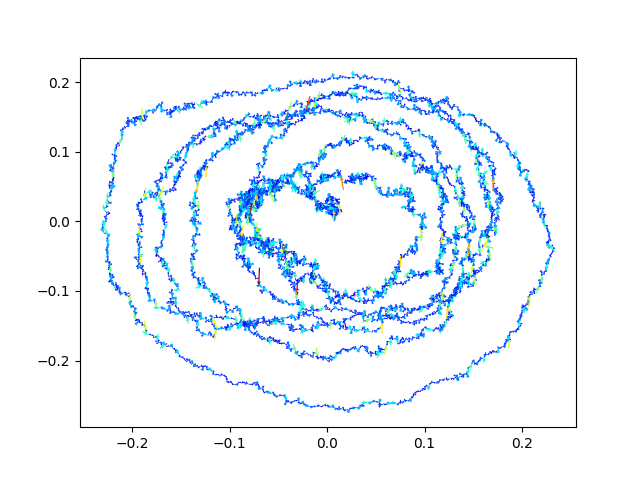}
\caption{\tiny FTPL}
\label{fig:FTPL_convex}
\end{subfigure}
~
\begin{subfigure}{0.23\textwidth}
\includegraphics[width=\textwidth]{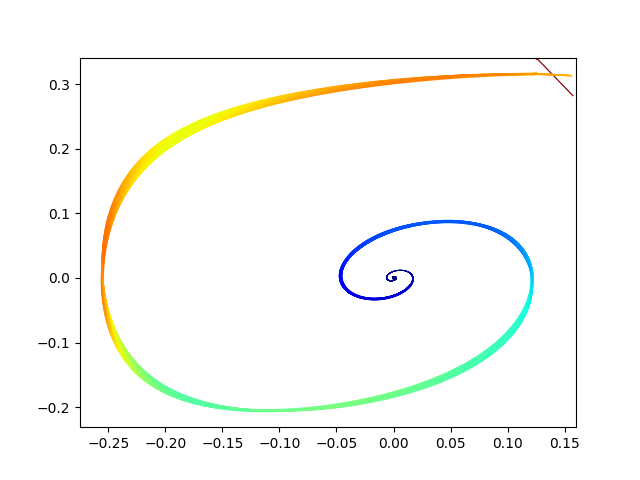}
\caption{\tiny FTNPL}
\label{fig:FTNPL}
\end{subfigure}
\caption{\small Training dynamic trajectories in example \ref{example1}. x-axis and y-axis are $\phi[0]$ and $\omega[0]$ respectively. $\|\mu_t-\mu_{t-1}\|^2$ are encoded using colors to track convergence.  Small values are blue and largest are red.}
\end{figure}

\begin{figure}
\centering
\begin{subfigure}{0.23\textwidth}
\includegraphics[width=\textwidth]{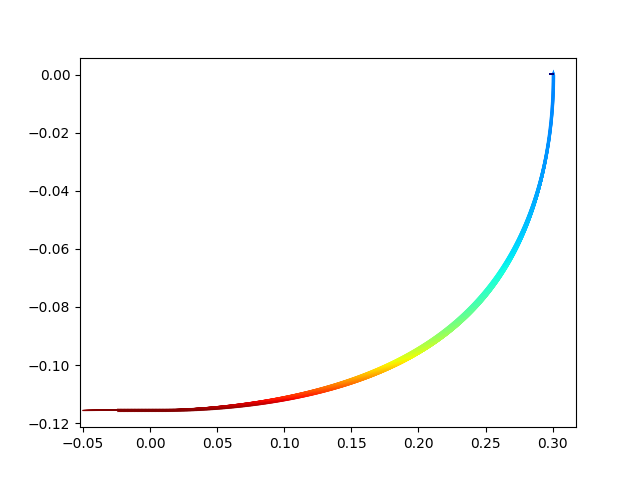}
\caption{\tiny FTRL}
\label{fig:FTRL_nonconvex}
\end{subfigure}
~
\begin{subfigure}{0.23\textwidth}
\includegraphics[width=\textwidth]{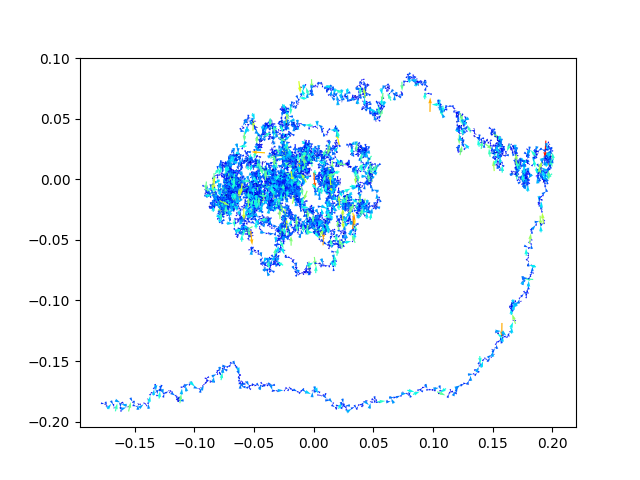}
\caption{\tiny FTPL}
\label{fig:FTPL_nonconvex}
\end{subfigure}
~
\begin{subfigure}{0.23\textwidth}
\includegraphics[width=\textwidth]{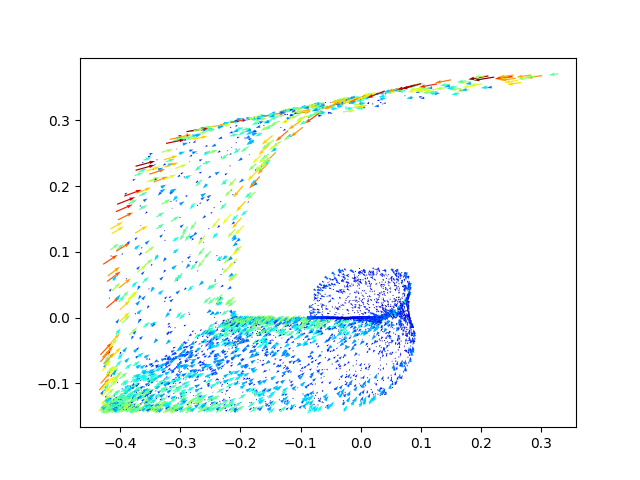}
\caption{\tiny FTNPL with one code}
\label{fig:FTNPL_nonconvex_1code}
\end{subfigure}
~
\begin{subfigure}{0.23\textwidth}
\includegraphics[width=\textwidth]{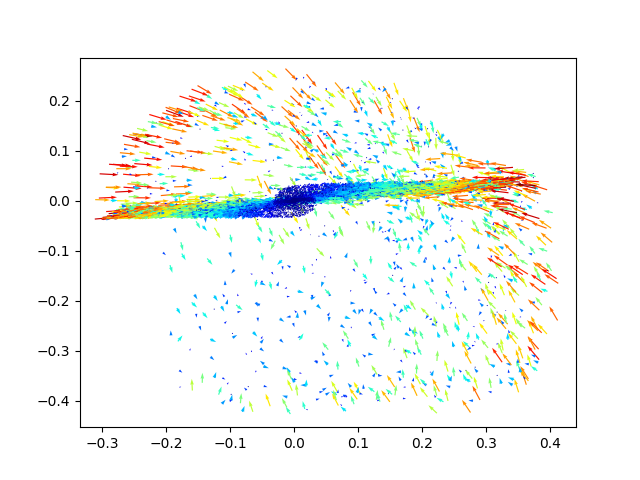}
\caption{\tiny FTNPL with two codes}
\label{fig:FTNPL_nonconvex}
\end{subfigure}
~

\caption{\small Training dynamic trajectories in example \ref{example2}. x-axis and y-axis are $\phi[0]$ and $\omega[0]$ respectively. $\|\mu_t-\mu_{t-1}\|^2$ are encoded using colors to track convergence.  Small values are blue and largest are red.}
\end{figure}
 \section{Follow the Neurally-Perturbed Leader} \label{Fig:FTNPL_scheme}
Another choice for regularizer $h(\theta)$ in FTRL is the entropy function $H(\theta)$. This choice of regularizer leads to Multiplicative Weights (MW) algorithm \cite{Arora2012TheMW}. 

 \begin{theorem}\label{MW_zeo_sum_dynamic:thm}
 When a continuous time MW algorithm is applied to zero-sum game with a fully mixed Nash equilibrium $\mu^* = (\mu^*_D, \mu^*_{\pi})$, cross entropy between each evolving strategy $\mu(t)$ and the players' mixed Nash equilibrium $H(\mu^*, \mu) = - \sum_{\phi \in \Phi} \mu^*_{\pi}(\phi)  \ln \mu_{\pi}(\phi)  -  \sum_{\omega \in \Omega} \mu^*_D(\omega) \ln \mu_D(\omega)$ remains constant. 
 \end{theorem}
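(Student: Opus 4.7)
My plan is to differentiate $H(\mu^*, \mu(t))$ directly and show the derivative vanishes using (i) the replicator form of continuous-time MW and (ii) the indifference/equalization property of an interior Nash equilibrium in a zero-sum game.

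\textbf{Step 1 (write the dynamics).} In continuous time, MW with entropic regularization is the replicator equation. Since $\pi$ minimizes the loss $L$ and $D$ maximizes $-L$, and writing $\bar{L}(t) = \bb{E}_{\phi\sim\mu_\pi,\,\omega\sim\mu_D}L(\phi,\omega)$, I would record the standard form
\[
\frac{\dot\mu_\pi(\phi)}{\mu_\pi(\phi)} = \bar L - \bb{E}_{\omega\sim\mu_D}L(\phi,\omega), \qquad \frac{\dot\mu_D(\omega)}{\mu_D(\omega)} = \bb{E}_{\phi\sim\mu_\pi}L(\phi,\omega) - \bar L .
\]
The opposite signs are exactly the zero-sum structure; they are what drives the cancellation in Step 3.

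\textbf{Step 2 (differentiate $H(\mu^*,\mu)$).} Since $\mu^*$ is time-independent, only the $\ln \mu(t)$ factors are differentiated, so
\[
\tfrac{d}{dt} H(\mu^*,\mu) = -\sum_{\phi} \mu^*_\pi(\phi)\,\tfrac{\dot\mu_\pi(\phi)}{\mu_\pi(\phi)} \;-\; \sum_\omega \mu^*_D(\omega)\,\tfrac{\dot\mu_D(\omega)}{\mu_D(\omega)} .
\]
Plugging in the expressions from Step 1, the $\bar L$ terms come with opposite signs and each is summed against a probability vector, so they cancel, leaving
\[
\tfrac{d}{dt} H(\mu^*,\mu) = \bb{E}_{\phi\sim\mu^*_\pi,\,\omega\sim\mu_D}L(\phi,\omega) \;-\; \bb{E}_{\phi\sim\mu_\pi,\,\omega\sim\mu^*_D}L(\phi,\omega).
\]

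\textbf{Step 3 (apply the equalization lemma at a fully mixed NE).} Because $\mu^*$ is fully mixed, every pure strategy lies in the support of both players, so the best-response indifference condition forces $\bb{E}_{\omega\sim\mu^*_D}L(\phi,\omega) = v$ for every $\phi$ and $\bb{E}_{\phi\sim\mu^*_\pi}L(\phi,\omega) = v$ for every $\omega$, where $v$ is the value of the game. Marginalizing over the opponent's current mixed strategy gives $\bb{E}_{\mu^*_\pi,\mu_D}L = \bb{E}_{\mu_\pi,\mu^*_D}L = v$. Substituting into the display above yields $\tfrac{d}{dt}H(\mu^*,\mu) = v - v = 0$, so $H(\mu^*,\mu(t))$ is constant along trajectories.

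\textbf{Main obstacle.} The calculation itself is short; the delicate point is the sign bookkeeping between the loss-minimizer $\pi$ and the payoff-maximizer $D$ (a misplaced minus sign produces a spurious $2(v-\bar L)$ instead of $0$), together with the clean invocation of the equalization lemma, which is the only place where the \emph{fully mixed} hypothesis on $\mu^*$ is genuinely used. Once those are handled correctly, the zero-sum structure does the rest and the cross-entropy is revealed as a first integral of the MW flow.
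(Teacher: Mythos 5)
Your proof is correct and follows essentially the same route as the paper: write continuous-time MW as replicator dynamics, differentiate the cross entropy, and observe that the resulting expression vanishes. In fact you are more careful than the paper at the final step — the paper asserts the bracketed expression is zero ``noting the zero-sum nature of the game,'' whereas your explicit use of the equalization/indifference property of the fully mixed equilibrium (both remaining terms equal the game value $v$) is exactly what is needed to justify that cancellation.
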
 
 
\begin{proof}
\small 
 Various works including \cite{Hofbauer2009TimeAR, Mertikopoulos2014LearningIG} have shown that the continuous-time version of MW algorithm follows a replicator dynamic \cite{Taylor1978EvolutionarilySS} as is described by:
 \begin{equation}
 \small
\begin{split}
\label{replicator:eq}
\frac{\dot{\mu}_{\pi}(\phi)} {\mu_{\pi}(\phi)}= u_{\pi}(\phi)-\sum_{\hat{\phi} \in \Phi}  \mu_{\pi}(\hat{\phi}) u_{\pi}(\hat{\phi}) ~~   ;  ~~
\frac{\dot{\mu}_{D}(\omega)}{ \mu_{D}(\omega)} = u_{D}(\omega)-\sum_{\hat{\omega} \in \Omega}  \mu_{D}(\hat{\omega}) u_{D}(\hat{\omega})
\end{split}
\end{equation}
where as before $u_{\pi}(\phi), u_{D}(\omega)$ are the marginal loss and $\dot{u}=\frac{du}{dt}$.

It suffices to take time derivative of cross entropy term and plug in Eq. \ref{replicator:eq} and note the zero-sum nature of the game:
 \begin{equation}
 \small
\begin{split}
\label{corss_entropy:eq}
\frac{dH(\mu^*, \mu(t))}{dt} = - \sum_{\phi} \mu^*_{\pi}(\phi) \frac{\dot{\mu}_{\pi}(\phi)}{\mu_{\pi}(\phi)} - \sum_{\omega} \mu^*_D(\omega) \frac{\dot{\mu}_D(\omega)}{\mu_D(\omega)} = \\
 - \sum_{\phi} \mu^*_{\pi}(\phi) [u_{\pi}(\phi)-\sum_{\hat{\phi} \in \Phi}  \mu_{\pi}(\hat{\phi}) u_{\pi}(\hat{\phi})] - \sum_{\omega} \mu^*_D(\omega) [u_{D}(\omega)-\sum_{\hat{\omega} \in \Omega}  \mu_{D}(\hat{\omega}) u_{D}(\hat{\omega})]=0
\end{split}
\end{equation}
 \end{proof}

\begin{figure}[b]
\tikzset{partial ellipse/.style args =
  {#1:#2:#3}{insert path={+ (#1:#3) arc (#1:#2:#3)}}}
\begin{tikzpicture}[>=latex]
  \draw [fill=white!90]    (3,-1.8) ellipse    (4cm and 1 cm);
  \draw [fill=white!90] (3,-1.8) ellipse (3cm and 0.75 cm);
  \draw [fill=white!90]  (3,-1.8) ellipse  (2cm and 0.5 cm);

  \shade [ball color=white] (3,-1.8) circle (.4);
  \node (equilibrium) at (3,-1.8) {$\mu^*$};
  \draw (3,-1.8) [partial ellipse=220:320:2cm and 0.5cm]
        (3,-1.8) [partial ellipse=220:320:3cm and 0.75cm];

  \shade [ball color=white] (1.3,-1.8) circle (.4);
  \node (mut3) at (1.3,-1.8) {$\mu_{t+3}$}; 

  \shade [ball color=white] (4,-1.1) circle (.4);
  \node (mut2) at (4,-1.1) {$\mu_{t+2}$}; 

  \shade [ball color=white] (5.75,-2.5) circle (.4);
  \node (mut1) at (5.75,-2.5) {$\mu_{t+1}$};
     
  \draw (3,-1.8) [partial ellipse=45:120:9cm and 1.5cm];
  \shade [ball color=white] (6,-0.3) circle (.4);
  \node (mut) at (6,-0.3) {$\mu_t$};   
  \draw [line width=0.1pt,black,->,>=latex] (mut) to (mut1);  
  \draw [line width=0.1pt,black,->,>=latex] (mut1) to (mut2);    
  \draw [line width=0.1pt,black,->,>=latex] (mut2) to (mut3);    
  
  \node at (1,-0.17) {$D_{KL}(\mu^* || \mu_t)=k$};
  \node at (6,-0.9) {\small $\mathcal{M}(.)=c_{t+1}$};
  \node at (4.8,-1.8) {\small $\mathcal{M}(.)=c_{t+2}$};
  \node at (2.2,-1.3) {\small $\mathcal{M}(.)=c_{t+3}$};
\end{tikzpicture}
\caption{\small FTNPL scheme: $\forall \mu_t$ on the same orbit, $D_{KL}(\mu^* || \mu_t)=k$. Mediator $\mathcal{M}$ minimizes $D_{KL}(\mu^* || \mu_t)$ by learning from reward term defined in eq. \ref{reward:eq}.}
\label{fig:scheme}
\end{figure}

\begin{lemma}\label{convergence:lemma}
Maximizing Nash entropy $H(\mu^*)$ implies convergence of MW to MNE is no longer weak. Therefore no recurrent/cyclic dynamics exist.
\end{lemma}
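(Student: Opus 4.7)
The plan is to combine the cross-entropy conservation law of Theorem~\ref{MW_zeo_sum_dynamic:thm} with the elementary decomposition of cross entropy as Shannon entropy plus Kullback--Leibler divergence. First, I would write
\begin{equation}
H(\mu^*, \mu(t)) \;=\; H(\mu^*) \;+\; D_{KL}(\mu^* \,\|\, \mu(t)),
\end{equation}
and apply Theorem~\ref{MW_zeo_sum_dynamic:thm} to conclude that the left-hand side equals a constant $k$ along the continuous-time MW trajectory. Rearranging gives $D_{KL}(\mu^* \,\|\, \mu(t)) = k - H(\mu^*)$, which is likewise constant in $t$. This already matches the geometric picture of Fig.~\ref{fig:scheme}: the trajectory $\mu(t)$ is trapped on the level set $\{\mu : D_{KL}(\mu^* \,\|\, \mu) = k - H(\mu^*)\}$, and the invariance of this orbit is exactly what produces the cyclic dynamics observed under vanilla MW.

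Next I would invoke Gibbs' inequality, $D_{KL}(\mu^* \,\|\, \mu) \geq 0$ with equality iff $\mu = \mu^*$. Combined with the identity above, this gives $H(\mu^*) \leq k$, with equality only at $\mu(t) = \mu^*$. Maximizing $H(\mu^*)$ over the admissible equilibrium set (in the MeMCorEq sense introduced earlier) therefore drives $H(\mu^*)$ up to this upper bound $k$, collapsing the invariant level set $\{D_{KL} = k - H(\mu^*)\}$ to the singleton $\{\mu^*\}$. The trajectory has no orbit left on which to rotate, so $\mu(t)$ coincides with $\mu^*$ pointwise rather than merely in time-average; hence convergence is strong, and no recurrent or cyclic behavior is possible.

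The main obstacle is conceptual rather than technical: the entropy of a fixed equilibrium is a fixed number, so ``maximizing $H(\mu^*)$'' has to be interpreted as selecting the maximum-entropy element of the equilibrium set (the MeMCorEq defined in Section~\ref{game_definitions}) rather than literally varying $\mu^*$ along the MW trajectory. Pinning this down, and linking it back to the role of the mediator whose reward is designed precisely to push $H(\mu^*)$ upward, is the subtle step. Once this interpretation is fixed, the remainder is a one-line consequence of Theorem~\ref{MW_zeo_sum_dynamic:thm} and Gibbs' inequality.
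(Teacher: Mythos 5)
Your proof takes essentially the same route as the paper: it rests on the decomposition $H(\mu^*,\mu(t)) = H(\mu^*) + D_{KL}(\mu^* \,\|\, \mu(t))$ combined with the cross-entropy conservation of Theorem~\ref{MW_zeo_sum_dynamic:thm}, concluding that maximizing the Nash entropy is equivalent to driving $D_{KL}(\mu^*\,\|\,\mu(t))$ to zero and hence to strong, non-cyclic convergence. Your additions --- invoking Gibbs' inequality to make the bound $H(\mu^*)\le k$ and the collapse of the invariant KL level set explicit, and flagging that ``maximizing $H(\mu^*)$'' only makes sense once the mediator is allowed to perturb the game/equilibrium --- simply spell out what the paper's one-line proof leaves implicit.
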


\begin{proof}
\small
The degree of weakness in convergence measures of FTRL can be quantified using KL divergence $D_{KL}(\mu^* || \mu)$. Since $H(\mu^*, \mu)=H(\mu^*)+D_{KL}(\mu^* || \mu)$, and $H(\mu^*,\mu)$ is constant from Theorem \ref{MW_zeo_sum_dynamic:thm}, $D_{KL}(\mu^* || \mu) \rightarrow 0$ is equivalent to Maximizing Nash entropy $H(\mu^*)$.
\end{proof}

Let us refer to trajectory of $\forall \mu_t$ that $D_{KL}(\mu^* || \mu_t)=k$ as \textit{KL orbit} as visualized in Fig \ref{fig:scheme}. Lemma \ref{convergence:lemma} then builds a useful intuition on how to avoid the cyclic behavior and to guarantee convergence (instead of weak convergence) to the MNE. By slowly maximizing $H(\mu^*)$, we can travel toward MNE one KL orbit at a time, until we reach an orbit with radius 0, i.e $D_{KL}(\mu^* || \mu_t)=0$. At this point, convergence is no longer weak and no recurrent dynamic exists. However, it is not feasible to control $H(\mu^*)$ without interfering with the game as Nash $\mu^*$ is predetermined by $L(\phi, \omega)$. We propose to use a \textit{neural network mediator agent} $\mathcal{M}$ that perturbs the original dynamic of the game by introducing auxiliary codes to players. We will show that a proper reward for $\mathcal{M}$ can be set to maximize Nash entropy $H(\mu^*)$ under the new perturbed game. However, without a game-theoretic formulation, players will \textit{ignore} these perturbations. To address this, we use our predefined notion of MeMCorEq. The derivation MeMCorEq in our setup is not straightforward, as mediator $\mathcal{M}$ is perturbing the game dynamics sequentially as demonstrated in Fig. \ref{fig:scheme}. 

\begin{theorem} \label{thm:main} Any zero-sum game will \textbf{converge to MNE without recurrent dynamics} when a mediator appends correlated codes $c$ to the inputs of both players according to the following reward function:
\begin{equation}
\begin{split}
\label{reward:eq}
r_m= - \sum_{i=0}^T \sum_{j=0}^{T} ReLU(g_\pi(\phi_i,\phi_j, c) + g_{D}(\omega_i,\omega_j,c))
\end{split}
\end{equation}
where  $h^T=(\mu_0...,\mu_T)$ is history of strategies of the game up to time $T$ and $\mu_t=(\phi_t,\omega_t)$, $ReLU(x)= \max(x,0)$ and $g$ is the marginal payoff gain defined in \ref{game_definitions}.
\end{theorem}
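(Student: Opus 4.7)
The plan is to deduce the theorem from Lemma \ref{convergence:lemma} in three stages: (i) characterize the maximizers of $r_m$, (ii) identify that maximizer with MeMCorEq of the code-augmented game, and (iii) connect MeMCorEq to Nash entropy $H(\mu^*)$ so that the convergence conclusion of Lemma \ref{convergence:lemma} applies.

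First I would analyze the reward function itself. Because $\mathrm{ReLU}(x) \ge 0$ and $r_m$ is its negation summed over the empirical history, $r_m \le 0$ with equality if and only if
\[
g_\pi(\phi_i,\phi_j,c) + g_D(\omega_i,\omega_j,c) \le 0
\]
for every pair $(i,j)$ in $h^T$ and every code realization $c$. In a zero-sum game this pairwise non-positivity is the natural empirical analogue of the MCorEq condition defined in Section \ref{game_definitions}: summing the marginal payoff gains kills the $L$-term common to both players, leaving a non-negative "regret of joint deviation" that vanishes precisely at an MCorEq of the code-augmented game. Hence the mediator's reward is a potential function whose unique maximum corresponds to MCorEq play conditional on the emitted codes.

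Second, since the codes are generated by a neural network and the set of MCorEq is convex and typically non-singleton, I would argue that the mediator's learning rule, together with stochastic exploration in its code distribution, selects the \emph{maximum-entropy} element of this set, i.e.\ the MeMCorEq of Section \ref{game_definitions}. This uses convexity of the entropy objective over the linear MCorEq polytope, together with the fact that MeMCorEq is unique (as in \cite{Ortiz2006MaximumEC}). The key conceptual step here is to view the code $c$ as the shared randomness of the correlation device: since the players treat $c$ as an input observation, their strategies factor through $c$, and the joint law over $(\phi,\omega)$ induced by marginalizing the code attains the entropy maximum consistent with the equilibrium constraints.

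Finally, I would close the loop with Lemma \ref{convergence:lemma}. In the code-augmented game the Nash equilibrium $\mu^*$ is shaped by the mediator: pushing toward MeMCorEq is equivalent to maximizing $H(\mu^*)$ for the perturbed game. Combining this with Theorem \ref{MW_zeo_sum_dynamic:thm}, which keeps $H(\mu^*,\mu(t))$ constant, forces $D_{KL}(\mu^* \,\|\, \mu(t)) \to 0$, so the players' trajectories jump inward from KL orbit to KL orbit (as in Fig.\ \ref{fig:scheme}) and converge strongly to MNE, ruling out recurrent behavior. The hardest step I anticipate is step (ii): rigorously arguing that the mediator, which is trained online in a discrete, non-stationary environment, actually attains the \emph{maximum-entropy} MCorEq rather than an arbitrary MCorEq, and that the induced perturbation shifts $H(\mu^*)$ monotonically. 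A careful treatment would either invoke a no-regret guarantee for the mediator's own update (so that its time-averaged play is approximately optimal against the induced joint distribution) or discretize the continuous-time KL-orbit picture of Lemma \ref{convergence:lemma} with an explicit Lyapunov argument.
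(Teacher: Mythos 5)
Your overall architecture (tie the maximizer of $r_m$ to MeMCorEq, then invoke Lemma \ref{convergence:lemma} to upgrade weak convergence to convergence) matches the paper's intent, but there is a genuine gap at your step (ii), and it is precisely the step the paper's proof is designed to avoid. You characterize the maximizers of $r_m$ as (empirical) MCorEq play and then argue separately that the mediator ``selects'' the \emph{maximum-entropy} element of the MCorEq polytope, appealing to stochastic exploration, convexity of the entropy objective, and uniqueness of MeMCorEq. Nothing in the reward \eqref{reward:eq} as you analyze it distinguishes the max-entropy MCorEq from any other MCorEq: any point of the polytope makes $r_m=0$, so a mediator maximizing $r_m$ has no gradient signal toward the entropic one, and your proposed repairs (a no-regret guarantee for the mediator, or a discrete Lyapunov argument) are research programs rather than proofs. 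Since Lemma \ref{convergence:lemma} needs the dynamics to \emph{maximize} $H(\mu^*)$ of the perturbed game --- not merely to land somewhere in the MCorEq set --- the conclusion does not follow from your argument as written.

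The paper closes this gap structurally rather than behaviorally: it starts from the \emph{dual} of the MeMCorEq convex program, $\inf_{\lambda\ge 0}\ln Z(\lambda)$, absorbs the Lagrange multipliers $\lambda$ into the code-augmented loss $L(\phi,\omega,c)$, enforces $\lambda\ge 0$ via the ReLU, and uses Jensen's inequality to bound $\ln Z(c)$ by the double sum of ReLU'd marginal payoff gains over the history (with the empirical average over $h^T$ standing in for the equilibrium marginals, justified by MeMCorEq being stricter than CCE). Thus $-r_m$ is by construction an upper bound on the dual objective, and maximizing $r_m$ \emph{is} (approximately) solving the maximum-entropy problem --- the entropy-maximization property you struggle to establish is built into the derivation of the reward, after which Lemma \ref{convergence:lemma} applies directly. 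Two smaller points: your claim that in a zero-sum game the sum $g_\pi+g_D$ ``kills the $L$-term common to both players'' is not correct, since $u_\pi$ and $u_D$ are expectations over different marginals ($\mu_D$ and $\mu_\pi$ respectively), so no common term cancels; and note the paper's MCorEq condition requires $g_\pi\le 0$ and $g_D\le 0$ separately, which matches the separated-ReLU form appearing in the proof body rather than the summed form inside a single ReLU that your equality condition uses.
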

\begin{proof}
\small
Dual problem of MeMCorEq is $\inf_{\lambda>=0} \ln(Z(\lambda)$ with the following relationship between the dual variables $\lambda$ and the primal variables $\mu$:
\begin{equation}
\small
\begin{split}
\label{prime_dual_rel:eq}
\ln(Z(\lambda)) = - \ln \bb{E}_{\omega \sim \mu_{D}} [\exp(\sum_{\phi} \sum_{\hat{\phi}} \lambda_{\phi,\hat{\phi}} L(\phi,\omega) -  L(\hat{\phi},\omega) ) ] \\
- \bb{E}_{\phi \sim \mu_{\pi}} [\exp(\sum_{\omega} \sum_{\hat{\omega}} \lambda_{\omega,\hat{\omega}} L(\phi,\omega) -  L(\phi,\hat{\omega)} ) ]
\end{split}
\end{equation}
Instead of learning the Lagrangian multipliers $\lambda$, the mediator learns to introduce code $c$ to the loss function $L(\phi,\omega, c)$ as if $\lambda$ is absorbed into the loss function. To make sure that the Lagrangian constraints $\lambda>=0$ are satisfied, we introduce ReLU function to the equation. This followed by the Jenson inequality yields:
\begin{equation}
\small
\begin{split}
\label{prime_dual_rel:eq}
\ln(Z(c)) \leq \\  \bb{E}_{\omega \sim \mu_{D}} [\sum_{\phi} \sum_{\hat{\phi}} ReLU( L(\hat{\phi},\omega,c) - L(\phi,\omega,c) ) ]   
+ \bb{E}_{\phi \sim \mu_{\pi}} [\sum_{\omega} \sum_{\hat{\omega}}  ReLU ( L(\phi,\hat{\omega},c) - L(\phi,\omega,c) ) ] = \\
\sum_{i=0}^T \sum_{j=0}^{T} ReLU(g_\pi(\phi_i,\phi_j, c)) + ReLU(g_{D}(\omega_i,\omega_j,c))
\end{split}
\end{equation}

The last equality comes from the fact that MeMCorq is a stricter notion than CCE and therefore like other no-regret learning algorithms, the average of past strategies can be used as a proxy for computing MNE $\mu^*$. The proof is then complete using the result of Lemma \ref{convergence:lemma} and definition of MeMCorq.

\end{proof}

In other words, instead of introducing entropy regularizer $h(.)$ to FTRL, $\mathcal{M}$ learns to minimize $D_{KL}(\mu^* || \mu_t)$ by incentivizing players along the way through augmentation of their observations via generate codes $c_t$.
The loss of the modified game is then $L(\phi_t,\omega_t, c_t)$.
We refer to such a no-regret algorithm follow the neurally perturbed leader (FTNPL) since it can be viewed as FTPL with a neural network agent $\mathcal{M}$ as an oracle. The scheme of FTNPL is visualized in Fig. \ref{fig:scheme}.

\begin{algorithm}[t] 
\small
   \caption{Follow the Neurally-Perturbed Leader (FTNPL)}
   \label{alg:FTNPL}
\begin{algorithmic}
   \STATE {\bfseries Input:}.
   Code size $C$, queue size $K$.
 \STATE {\bfseries Initialize:}  Initial parameters $(\phi_0, \omega_0, \psi_0)$ for agent $\pi$, discriminator $D$ and mediator $\mathcal{M}$ respectively. 
Initial observable information in the game $\mathcal{I}_0$, empty queues of size $K$ $h_D= h_{\pi}=\emptyset$, $h_{\pi}$.insert($\phi_0$), $h_{D}$.insert($\omega_0$),  $\bold{u}_D=[], \bold{u}_{\pi}=[]$
 
   \FOR{$i=0, 1, 2, ... $}
        \STATE  $c_i = \mathcal{M}_{\psi_i}(\mathcal{I}_i)$

       \begin{multicols}{2}
           \FOR{$\phi \in h_{\pi}$}
              \STATE  $$\bold{u}_{\pi}.\text{append}(L(\phi_i,\omega_i, c_i))$$
            \ENDFOR  
      $$\phi_{i+1} \leftarrow \nabla_{\phi_i} \sum \bold{u}_{\pi}  $$
      \FOR{$\omega \in h_{D}$}
         \STATE  $$\bold{u}_D.\text{append}(-L(\phi_i,\omega_i, c_i))$$
       \ENDFOR  
     $$\omega_{i+1} \leftarrow \nabla_{\omega_i} \sum \bold{u}_D  $$
       \end{multicols}
        \STATE $\psi_{i+1} \leftarrow -\nabla_{\psi_i} \hat{\mathbb{E}}_{\chi_i}  \log \mathcal{M}_{\psi_i}(\mathcal{I}_i )r_m(\bold{u}_D, \bold{u}_{\pi})$ with $r_m$ defined in eq. \ref{reward:eq}
       \STATE  $h_{\pi}$.insert($\phi_{i+1}$), $h_D$.insert($\omega_{i+1}$), $\bold{u}_{\pi}=[]$, $\bold{u}_D=[]$,

   \ENDFOR
\end{algorithmic}
\end{algorithm}
\subsection{FTNPL Implementation}
The description of the algorithm is given in \ref{alg:FTNPL}. At every time step, the mediator uses the available information in the game $\mathcal{I}_t$ to generate correlated codes $c_t$. 
$\mathcal{I}$ can take the form of pair of latest strategies of the games $(\phi,\omega)$ (in the case of games), pair of observations and actions $(s, a)$ (in the case of imitation learning) or the real data in the form of generative networks.
Both players update their parameters using an FTL algorithm. At every step of the game, the mediator updates its parameters according to the reward function in Eq. \ref{reward:eq}. In practice, we use the second power of $g$ instead of $ReLU$ function. We also parameterize the mediator policy using the reparameterization trick \cite{Blum2015VariationalDA}. In the convex case (e.g Ex. \ref{example1}), mediator action is implemented as the mean of the parameterized policy distribution. However, in the non-convex case (e.g Ex. \ref{example2}), mediator actions have to be random samples from the parameterized policy distribution for the game to converge. For all the experiments except Fig. \ref{fig:FTNPL}, we implemented the mediator action as random samples rather than mean. 

In practice, we keep the queue size $K$ small since runtime and memory of FTNPL algorithm grow linearly with $K$. Unlike \cite{Grnarova2017AnOL}, FTNPL requires no special queuing update. 
Thanks to the theoretical guarantees of FTNPL, none of the previous GAN training hacks such as choices of entropy regularization, grad penalty, or parameter clipping for the discriminator is required. FTNPL removes the recurrent dynamics as well as difficulties of past training methodologies. 

\section{Applications}
We chose $K=5$ and a code-size of $C=2$ for all the experiments. Note that correlated codes intuitively represent the Lagrangians of the optimization problem for both players and therefore $C=2$ is a suitable choice.

\subsection{Matching Pennies Game }
We applied FTNPL to example \ref{example1} and \ref{example2} with $\mathcal{I}$ being the pair of latest strategies of the games $(\phi,\omega)$. It converges to MNE $\mu^*$ in both cases. The training dynamics do not exhibit recurrent dynamics as shown in Fig. \ref{fig:FTNPL} and converges to MNE even under non-convex losses as shown in Fig. \ref{fig:FTNPL_nonconvex_1code} and Fig. \ref{fig:FTNPL_nonconvex} with code size of $C=1$ and $C=2$ respectively. 

\subsection{Generative Adversarial Networks}
\begin{figure}
\centering
\begin{subfigure}[h]{0.23\textwidth}
\includegraphics[width=\textwidth]{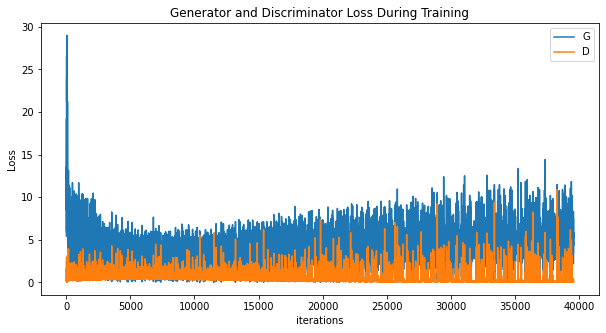}
\caption{\tiny GAN training dynamic applied to celebrity data. }
\label{fig:gan_dynamic}
\end{subfigure}
~
\begin{subfigure}[h]{0.22\textwidth}
\includegraphics[width=\textwidth]{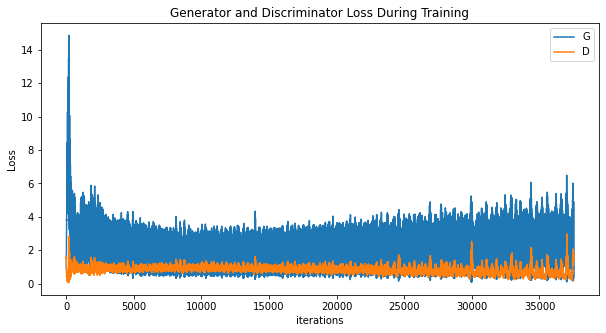}
\caption{\tiny FTNPL-GAN training dynamic applied to celebrity data. }
\label{fig:ftnpl_gan_dynamic}
\end{subfigure}
~
\begin{subfigure}[h]{0.23\textwidth}
\includegraphics[width=\textwidth]{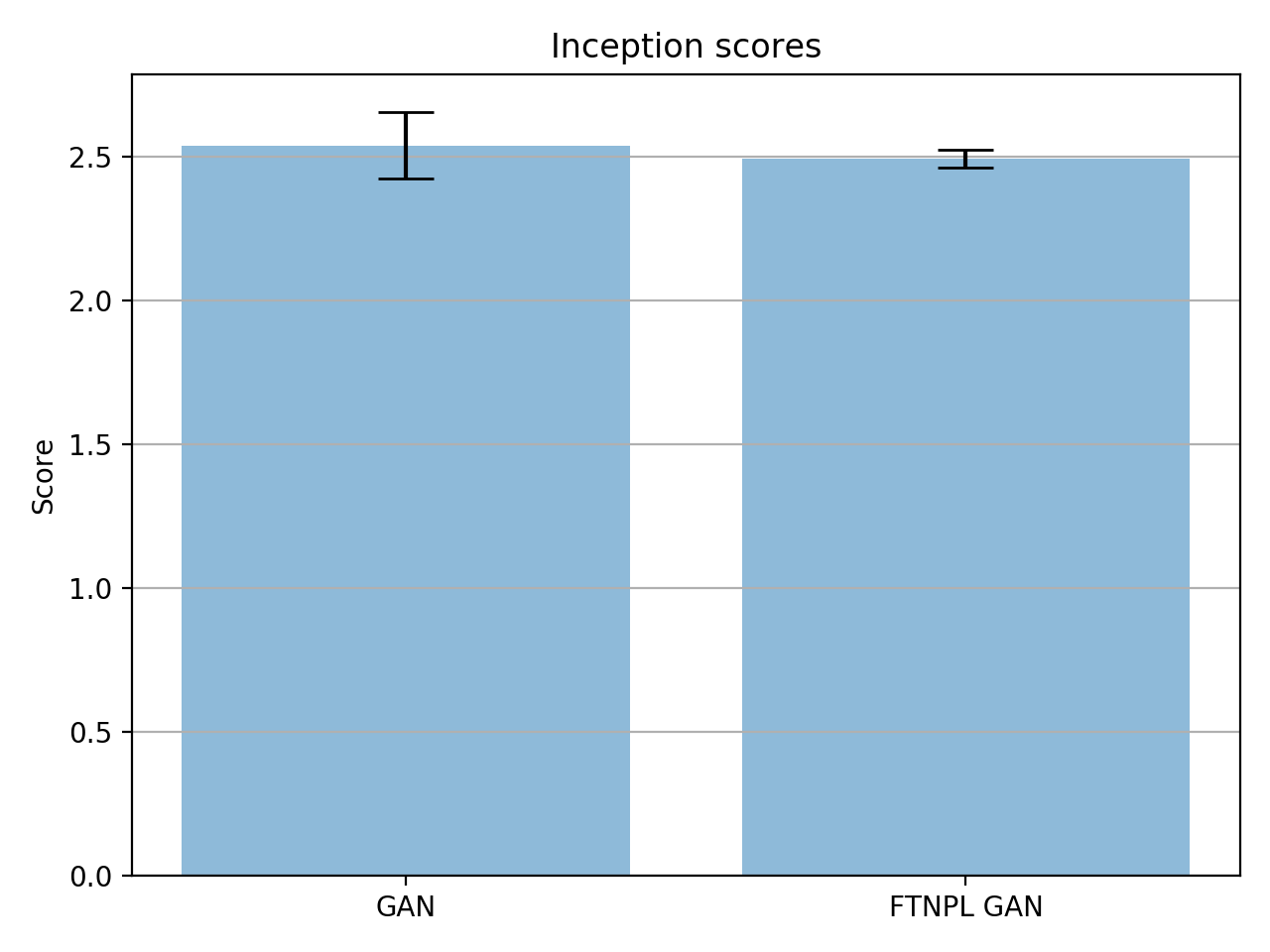}
\caption{\tiny Inception scores after 25 epochs.}
\label{fig:inception}
\end{subfigure}
~
\begin{subfigure}[h]{0.23\textwidth}
\includegraphics[width=\textwidth]{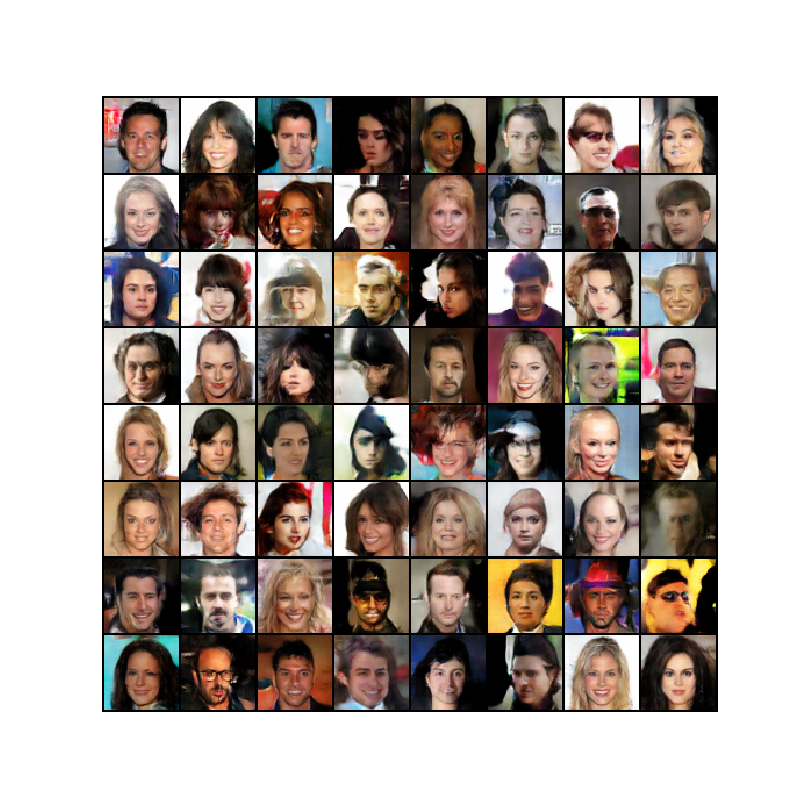}
\caption{\tiny Generated images after 25 epochs.}
\label{fig:generated_images}
\end{subfigure}
\caption{\small FTNPL applied to GAN.}
\label{fig:classic}
\end{figure}

Generative adversarial networks (GAN) \cite{Goodfellow2014GenerativeAN} is a well-known zero-sum deep learning architecture capable of generating synthetic samples from arbitrary distribution $p_\text{data}$. Player $\pi$ gets random noise $z$ as input and generates synthetic data $\pi_{\phi}(z)$. Player $D$ then assigns scores $D_{\omega}(\pi_{\phi}(z))$. $L(\phi,\omega)$ is the Jenson-Shannon distance between $\pi_{\phi}(z)$ and $p_\text{data}$.
We applied FTNPL to DCGAN \cite{Radford2015UnsupervisedRL} with $\mathcal{I}$ being the real image data. Our experiment using celeb data \cite{liu2015faceattributes} shows that training dynamic of FTNPL is smoother compared with that of DCGAN as demonstrated in Fig. \ref{fig:ftnpl_gan_dynamic} and \ref{fig:gan_dynamic}. We stoped training for both models after 25 epochs and computed their inception scores \cite{Salimans2016ImprovedTF}. Fig. \ref{fig:inception} shows that FTNPL leads to smaller variance for the inception score. Some generated samples are visualized in Fig. \ref{fig:generated_images}.
\subsection{Adversarial Imitation Learning}
\begin{algorithm}[h] 
\small
   \caption{Correlated Rollout}
   \label{alg:rollout}
\begin{algorithmic}
  \begin{multicols}{2}
   \STATE {\bfseries Input:} policy network $\pi$, mediator policy network $\mathcal{M}$, $i =0$, number of steps $n$, an initial code $c_0 = 0_C$ with $C$ the size of codes,
   initial state-action trajectory $\tau=[]$,  initial code trajectory $\tau_c=[]$,  environment env, $s_0$ = env.reset()
   \STATE \textbf{Output: } state-action trajectory $\tau$ , code trajectory $\tau_c$.
   \WHILE{ not done and $i<n$}
   \STATE $a = \pi(s_i,c_i)$
   \STATE $\tau_c$.append($c_i$), $\tau$.append($s_i$, $a_i$)
    \STATE $i+=1$
   \STATE $s_i$, done  = environment($a$)
    \STATE $ c_i = \mathcal{M}(s_i,a)$
  
   \ENDWHILE
     \end{multicols}
\end{algorithmic}
\end{algorithm}

Training dynamics of generative adversarial imitation learning (GAIL) \cite{Ho2016GenerativeAI} is more complicated than GAN. This is because the agent environment is a black box and this makes the optimization objective to be non-differentiable end-to-end. As a result, proper policy \textit{rollouts} and Monte-Carlo estimation of policy gradients are required which makes the training dynamic more complicated. Therefore, the rest of the experiments are focused on the application of the FTNPL to GAIL. 
To formally explain the GAIL let $(\mathcal{S}, \mathcal{A}, P, \mathit{r}, \rho_0, \gamma)$ be an infinite-horizon, discounted Markov decision process (MDP) with state-space $\mathcal{S}$ , action space $\mathcal{A}$, transition probability distribution $P:\mathcal{S}\times\mathcal{A}\times\mathcal{S}\to\mathbb{R}$, reward function $\mathit{r}:\mathcal{S}\to\mathbb{R}$, distribution of the initial state $s_0$ $\rho_0:\mathcal{S}\to\mathbb{R}$, the discount factor $\gamma\in(0,1)$.
In the case of imitation learning, we are given access to a set of expert trajectories $\tau_E$  that are achieved using expert policy $\pi_E$. We are interested at estimating a stochastic policy $\pi:\mathcal{S}\times\mathcal{A}\to[0,1]$. To estimate $\pi_E$, GAIL optimizes the following: 
\begin{equation}
\begin{split}
\label{gail:eq}
\min_{\pi}\max_{D\in (0,1)^{\mathcal{S}\times\mathcal{A}}}\bb{E}_{\pi}[\log D(s, a)] + 
\bb{E}_{\pi_E}[\log (1 - D(s, a))] - \lambda H(\pi)
\end{split}
\end{equation}
, with the expected terms defined as $\bb{E}_\pi[D(s, a)]\triangleq\bb{E}[\sum_{t=0}^\infty\gamma^tD(s_t, a_t)]$, where $s_0 \sim \rho_0$, $a_{t} \sim \pi(a_t | s_t)$, $s_{t+1} \sim P(s_{t+1} | a_t, s_t)$,  and $H(\pi) \triangleq \bb{E}_{\pi}[-\log \pi(a | s)]$ is the $\gamma$-discounted causal entropy. An adversary player $D$ tries to distinguish state-action pairs generated during \textit{rollout} using $\pi$ from the demonstrated trajectories generated by $\pi_E$. To apply FTNPL Alg. \ref{alg:FTNPL} to GAIL, we modify the rollout algorithm to Alg. \ref{alg:rollout} with $\mathcal{I}= (s,a)$. The concept of adding codes to the policy network is similar to infoGAIL \cite{Li2017InfoGAILII}. InfoGAIL uses fixed code to guide an entire trajectory. Moreover, it uses other regularization terms in the policy gradient optimization objective, to make sure that the codes would not be ignored. The correlated codes have different properties. First, they are generated per state-action (they also are fed into the discriminator) and therefore it addresses the multimodality and other types of variations within the trajectories as well. Second, there is no need to include any extra regularization terms including discounted causal entropy, i.e we assume $\lambda =0$ in Eq. \ref{gail:eq}. Agent $\pi$ uses PPO \cite{Schulman2017ProximalPO} for updates. We also used the utility definition of Wasserstein GAN \cite{arjovsky2017wasserstein} for our final implementation. 
 The rest is a straightforward application of FTNPL \ref{alg:FTNPL} to GAIL. We also apply FTPL algorithm to GAIL as a baseline.
\subsubsection{Expriements}
\begin{figure}
\centering
\begin{subfigure}{0.23\textwidth}
\includegraphics[width=\textwidth]{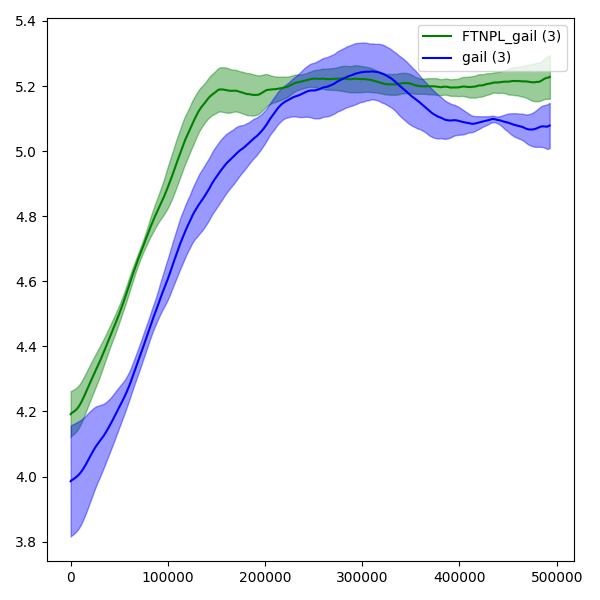}
\caption{\tiny Coinrun with One level}
\label{fig:one_level}
\end{subfigure}
~
\begin{subfigure}{0.23\textwidth}
\includegraphics[width=\textwidth]{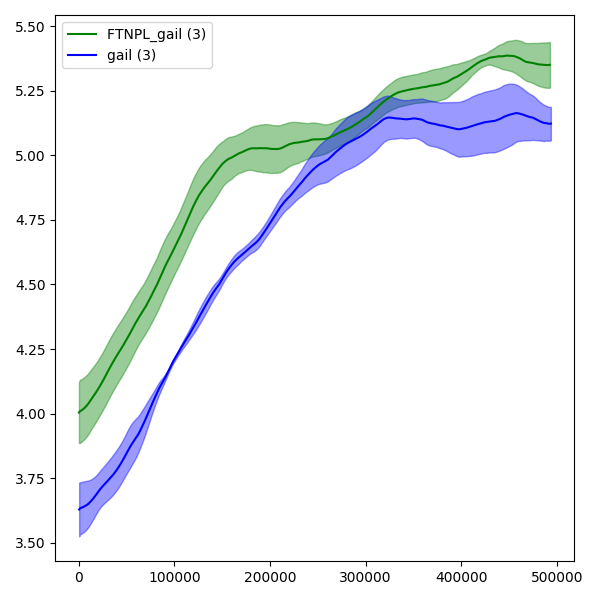}
\caption{\tiny Coinrun with Unbounded levels}
\label{fig:all_levels}
\end{subfigure}
~
\begin{subfigure}{0.23\textwidth}
\includegraphics[width=\textwidth]{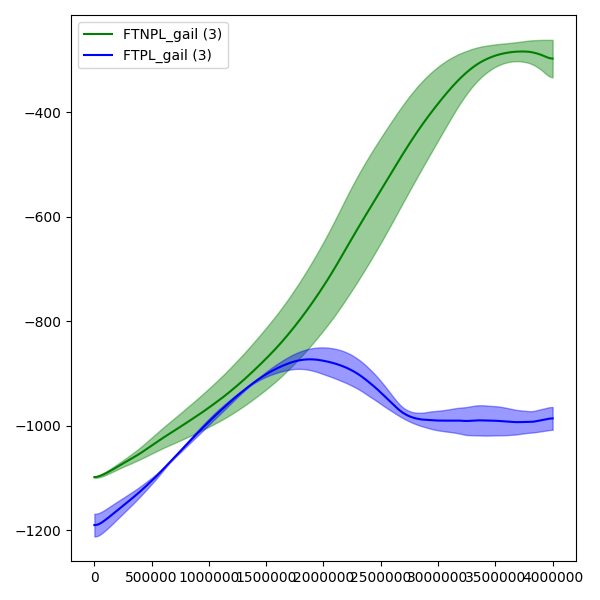}
\caption{\tiny Pendulum}
\label{fig:pend}
\end{subfigure}
~
\begin{subfigure}{0.23\textwidth}
\includegraphics[width=\textwidth]{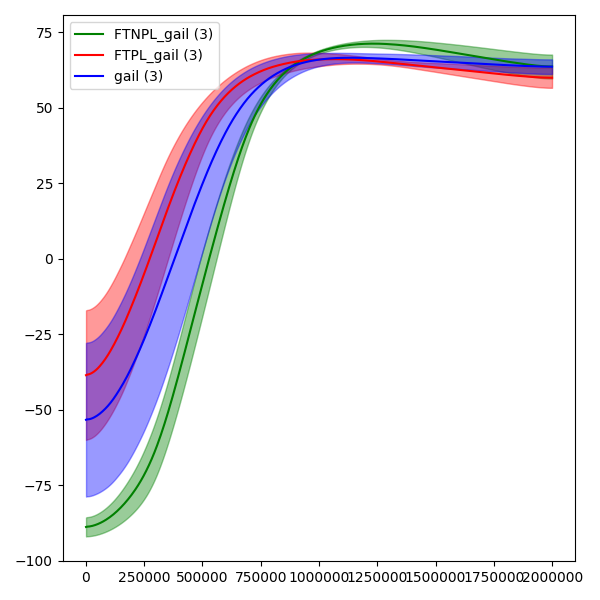}
\caption{\tiny Continuous Mountain Car}
\label{fig:car}
\end{subfigure}
\caption{\small FTNPL applied to Imitation Learning.}
\end{figure}

\textbf{Changing environments} When learning from the expert trajectories $\pi_E$, one has to take into account the \textit{internal} factors of variations such as learning in new environments that not are demonstrated in the expert trajectories. For example, observations can vary for different levels of a game. Using the CoinRun environment \cite{Cobbe2018QuantifyingGI}, we set up an experiment to show that the introduction of correlated codes in FTNPL not only can address recurrent dynamic issues but also the internal factors of variations. CoinRun is a procedurally generated environment that has a configurable number of levels and difficulty. It can provide insight into an agent's ability to generalize to new and unseen environments. The game observations are high dimensional ( $64 \times 64 \times 3 $ RGB) and therefore it is also suitable for testing the efficiency of RL-based mediator of FTNPL under a complex environment. 
The only reward in the CoinRun environment is obtained by collecting coins, and this reward is a fixed positive constant.  A collision with an obstacle results in the agent's death and levels vary widely in difficulty. The level terminates when the agent dies, the coin is collected, or after 1000 time steps. 
We first trained a PPO \cite{Schulman2017ProximalPO} agent using the 3-layer convolutional architecture proposed in \cite{Mnih2015HumanlevelCT}. We stopped the training when it achieved a reward of 6.3 with 500 levels. We then generated expert trajectories with the same number of levels. However, in imitation learning training, two different number of levels are selected: one and unbounded set of levels. A higher number of levels decrease the chance that a given environment gets encountered more than once. For the unbounded number of levels, this probability is almost 0.
The selection of these different numbers of levels provides an insight into the adaptability and transferability of the imitation algorithms to new environments.
 We visualized the performance of FTNPL GAIL and GAIL, averaged over three different seeds.  When there is only one level, the gap between the sample efficiency of GAIL and FTNPL-GAIL is not very wide as shown in Fig. \ref{fig:one_level}. However, with the increase in the number of levels (i.e changing environments), the FTNPL training outperforms GAIL more noticeably as demonstrated in \ref{fig:all_levels}. 
We also did a few classic baselines for MountainCarContinuous-v0 in Fig. \ref{fig:car} and pendulum-v0 in Fig. \ref{fig:pend}. FTNPL outperforms the other baselines and also has smaller variances across all the experiments. 

\textbf{Imitating mixture of state-action trajectories}
Another type of variation in $\pi_E$ is the \textit{external} one, such as when one is learning from a mixture of expert demonstrations. For this, we used the Synthetic 2D-CircleWorld experiment of \cite{Li2017InfoGAILII}. 
The goal is to select direction strategy at time $t$ using the observations of $t-4$ to $t$ such that a path would mimic those demonstrated in expert trajectories $\tau_E$. These expert trajectories are stochastic policies that produce circle-like trajectories. They contain three different modes as shown in Fig. \ref{fig:circleworld}. A proper imitation learning should have the ability to distinguish the mixture of experts from each other. The results in Fig. \ref{fig:circleworld} demonstrate the path of learned trajectories during the last 40K of the overall 200K steps of training. It can be seen that FTNPL-GAIL can distinguish the expert trajectories and imitate the demonstrations more efficiently than FTPL-GAIL and GAIL. 

\begin{figure}
\centering
\begin{subfigure}[b]{0.23\textwidth}
\includegraphics[width=\textwidth]{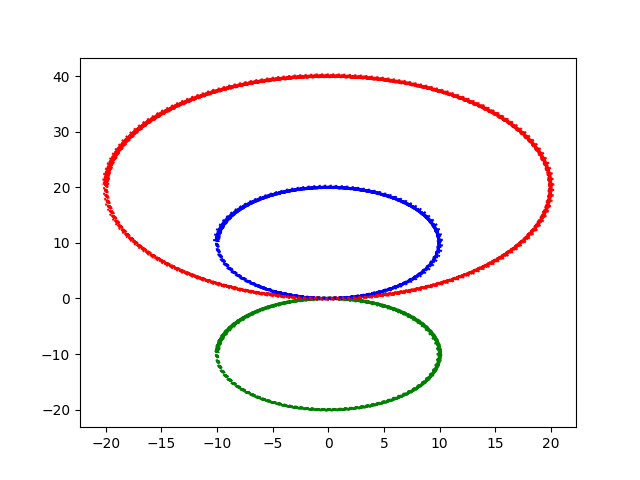}
\caption{\small Expert}
\end{subfigure}
~
\begin{subfigure}[b]{0.23\textwidth}
\includegraphics[width=\textwidth]{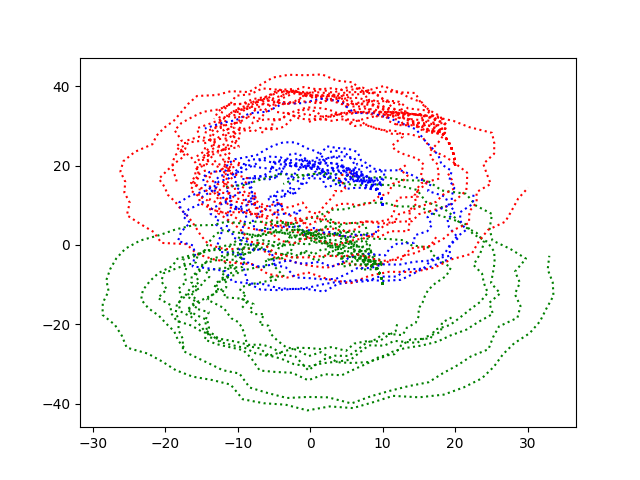}
\caption{\small GAIL}
\end{subfigure}
~
\begin{subfigure}[b]{0.23\textwidth}
\includegraphics[width=\textwidth]{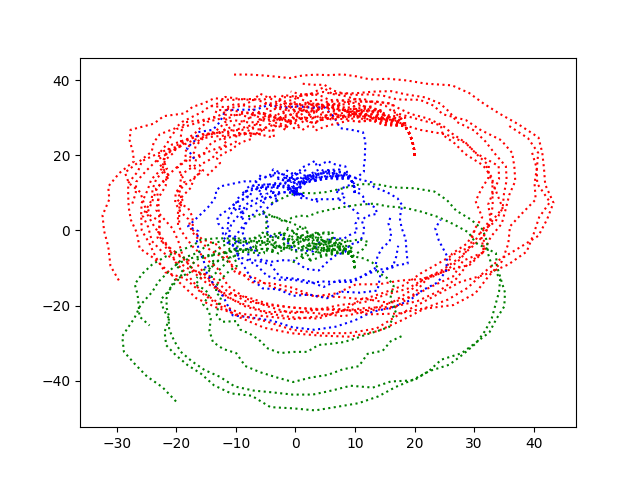}
\caption{\small FTPL GAIL}
\end{subfigure}
~
\begin{subfigure}[b]{0.23\textwidth}
\includegraphics[width=\textwidth]{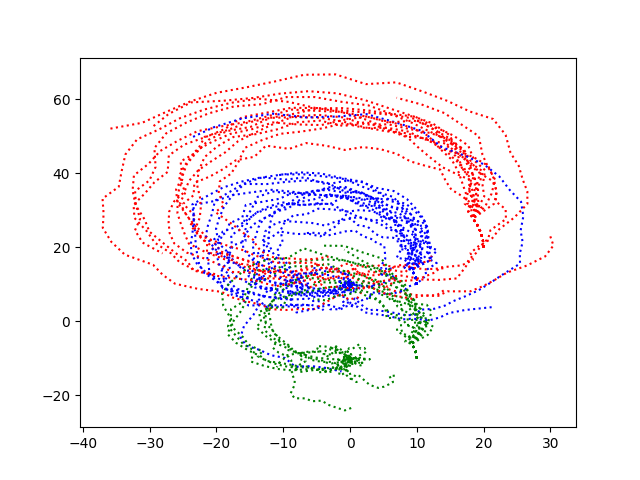}
\caption{\small FTNPL GAIL}
\end{subfigure}
\caption{\small Path of trajectories during training: The results are for the last 40K of the overall 200K steps.}
\label{fig:circleworld}
\end{figure}
\nocite{pytorchrl}

\section{Conclusion} 
We proposed a novel follow the perturbed leader algorithm for training adversarial architectures. The proposed algorithm guarantees convergence to mixed Nash equilibrium without recurrent dynamics and the loss-convexity assumption. As demonstrated in our experiments, it is also suitable for training in environments with various factors of variations thanks to the introduction of correlated codes. 

\bibliographystyle{neurips_2020}
\bibliography{neurips_2020}

\end{document}